\newcommand*{\rom}[1]{\expandafter\@slowromancap\romannumeral #1@}
\newcommand{\comp}{\mathrm{Comp}}
\newtheorem{theorem}{Theorem}
\newtheorem{definition}{Definition}
\newcommand{\sumtwo}{\mathrm{Sum}_2}
\newcommand{\pal}{\mathrm{Palindrome}}
\newcommand{\sign}{\mathrm{sign}}
\title{Lower bounds on transformers with infinite precision}
\author{ Alexander Kozachinskiy\footnote{alexander.kozachinskyi@cenia.cl}}
\date{Centro Nacional de Inteligencia Artificial, Chile
}
\begin{document}
\maketitle
\begin{abstract}
    In this note, we use the VC dimension technique to prove the first lower bound against one-layer softmax transformers with infinite precision. We do so for two tasks:  function composition, considered by Peng, Narayanan, and Papadimitriou, and the SUM$_2$ task, considered by Sanford, Hsu, and Telgarsky.
\end{abstract}
\section{Introduction}
Hahn~\cite{hahn2020theoretical} initiated the study of \emph{lower bounds} for the transformer architecture. A lower bound is a result of the form that for a certain task, with a certain number of parameters, there exists no choice of the parameters for which the transformer performs this task. Such results are meant to theoretically explain the poor performance of transformers at certain tasks.

Hahn obtained the first lower bounds against a theoretical model of transformers with \emph{hardmax} attention. In this model, instead of taking a convex combination of all input tokens using \emph{softmax}, one simply takes a single token where the attention is maximal. Hahn has shown that hardmax transformers with $O(1)$ layers are not able to compute parity, majority, and Dyck languages.  It was later proved by Hao, Angluin, and Frank~\cite{hao2022formal} that hardmax transformers with $O(1)$ layers are upper bounded by the complexity class $AC^0$. This allows to re-establish lower bounds of Hahn since parity, majority, and Dyck are well-known to be outside $AC^0$.

Proving lower bounds against $O(1)$-layer softmax transformers is notoriously hard; as was recently observed by Chen, Peng, and Wu~\cite{chen2024theoretical}, such transformers simulate constant-depth symmetric circuits, lower bounds for which seem to require a break-through in complexity theory. They have announced a lower bound against a constant number of layers of a \emph{decoder-only} architecture that avoids this barrier.

Previous works~\cite{peng2024limitations,DBLP:conf/nips/SanfordHT23,bhattamishra2024separations} have developed a technique against 1-layer softmax transformers where the output is computed in one of the tokens after the layer. The technique is based on \emph{communication complexity}; assuming input tokens are split between Alice and Bob, it is observed that the output token can be computed by Alice and Bob with low communication (and then the lower bound is proved via a reduction from some communication problem). 

This technique requires an assumption that computations in the transformers are performed with a relatively low number of \emph{bits of precision}. As a result, for a number of tasks, one can get lower bounds of the following form: any 1-layer softmax transformer, performing the task, requires either $n^{\Omega(1)}$ embedding dimension or $n^{\Omega(1)}$ number of precision bits.

\medskip

In this note, we develop a lower bound technique against 1-layer softmax transformers with \emph{infinite precision} (and with the output being computed in one of the tokens). Our technique employs upper bounds on \emph{VC dimension} of hypothesis classes, computable with a small number of basic arithmetic operations~\cite{goldberg1995bounding}. For our technique, we replace the assumption about the number of bits of precision with the assumption about the size of the \emph{output MLP} (and we assume it uses the ReLU activation). This is inevitable -- with infinite precision, a softmax layer can compute a binary representation of the input, and then a sufficiently large output MLP can compute any function.

We obtain lower bounds for two tasks, previously considered in the literature. One is \emph{function composition}~\cite{peng2024limitations}. On input for this task, we get a list $(\phi(1), \ldots, \phi(n))$ for a function $\phi\colon\{1, \ldots, n\}\to\{1, \ldots, n\}$. We are asked to output $\phi(\phi(1))$ (we show a lower bound even for a task of comparing $\phi(\phi(1))$ with 1). The other is the SUM$_2$ task, considered in~\cite{DBLP:conf/nips/SanfordHT23}  -- we get an array of $n$ integers, and we asked if there are two elements of the array whose sum is $0$. For both of these tasks, we show that any 1-layer softmax transformer for them must have either embedding dimension $n^{\Omega(1)}$ or the output MLP of size $n^{\Omega(1)}$.

We conclude the paper with a remark about the palindrome recognition task. It can be solved with constant embedding dimension and constant-size output MLP, assuming \emph{infinite precision} (more specifically, $O(n)$ bits would suffice). At the same time, via the communication complexity technique, it can be shown that with $n^{o(1)}$ bits of precision, one requires embedding dimension $n^{\Omega(1)}$.  The fact that our technique applies to SUM$_2$ but not to palindromes can be explained by a difference in the VC dimension of certain matrices.

\section{The model}
Fix an alphabet $\Sigma$ and the input length $n$. We are interested in computing functions of the form $f\colon \Sigma^n \to \{0, 1\}$. We consider three examples:
\begin{itemize}
    \item The composition function, denoted by $\comp_n$ for a given $n\in\mathbb{N}$. For this function, $\Sigma= \{1, \ldots, n\}$. To define $\comp_n(a_1, \ldots, a_n)$ for $(a_1, \ldots, a_n)\in\{1, \ldots, n\}^n$, we do the following. Let $\phi\colon\{1, \ldots, n\}\to\{1, \ldots, n\}$ be such that $\phi(1) = a_1, \ldots, \phi(n) = a_n$. Define 
    \[\comp_n(a_1\ldots a_n) = \begin{cases}
        1 & \phi(\phi(1)) = 1,\\
        0 & \text{otherwise}.
    \end{cases}\]
   \item For $n, m \in\mathbb{N}$, we set $\Sigma = \{-m, \ldots, m\}$ and define $\sumtwo^{n,m}\colon \Sigma^n\to\{0, 1\}$ by
\[\sumtwo^{n,m}(a_1, \ldots, a_n) = \begin{cases} 1& \text{there are } i, j\in\{1, \ldots, n\} \text{ with } a_i + a_j = 0,\\ 0 & \text{otherwise}.\end{cases}\]
\item For $n\in\mathbb{N}$, we define $\pal_n\colon\{0, 1\}^n\to\{0, 1\}$ by
\[\pal_n(x_1 \ldots x_n) = \begin{cases}1 & x_i = x_{n+1-i} \text{ for every } i\in\{1, \ldots, n\},\\ 0 & \text{otherwise}.\end{cases}\]
\end{itemize}

\begin{definition}
    A 1-layer single-token output transformer  $T$ of embedding dimension $d$ for input size $n$ and input alphabet $\Sigma$ is given by
    \begin{itemize}
        \item a ``positional encoding'' $p\colon \{1, \ldots, n\}\times \Sigma\to\mathbb{R}^d$
        \item initial ``value vector'' $h\in\mathbb{R}^d$ of the output token;
        \item the ``key'' and the ``query'' matrices $K, Q\in\mathbb{R}^{d\times d}$;
        \item the ``output MLP'' $\mathcal{N}\colon\mathbb{R}^d\to \mathbb{R}$ which is a ReLU neural network with some fixed choice of weights.
    \end{itemize} 
\end{definition}
The transformer $T$ defines a function $T\colon\Sigma^n\to\{0, 1\}$, computed on input $\bar x = x_1\ldots x_n\in \Sigma^n$ as follows. First, we define the value vectors of input tokens using positional encoding: 
\[f_1 = p(1,x _1), \ldots, f_n = p(n, x_n)\in\mathbb{R}^d.\]
Then we compute a convex combination of input tokens with weights, given by  the softmax of the  scalar product attention with matrices $K$ and $Q$:
\[\widehat{h} = \frac{e^{\langle K f_1, Q h\rangle}f_1 + \ldots + e^{\langle K f_n, Q h\rangle}f_n}{e^{\langle K f_1, Q h\rangle} + \ldots + e^{\langle K f_n, Q h\rangle}}.\]
We set $T(\bar x) = 1$ if and only if $\mathcal{N}(h + \widehat{h}) > 0$.

\section{Proofs}
\begin{theorem}
    There is no 1-layer single-token output transformer with embedding dimension $n^{o(1)}$ and output MLP with $n^{o(1)}$ ReLU neurons that computes $\comp_n$.
 \end{theorem}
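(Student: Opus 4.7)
The plan is to show that any $1$-layer transformer $T$ of the allowed size, if it computes $\comp_n$, must realize $2^{n-1}$ Boolean functions on $\{2,\ldots,n\}$ via a hypothesis class whose VC dimension the Goldberg--Jerrum bound forces to be only $n^{o(1)}$.

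I would start by restricting $\comp_n$ to inputs $(a_1, b_1, \ldots, b_{n-1})$ with $a_1 \in \{2, \ldots, n\}$ and $\bar b \in \{1, 2\}^{n-1}$. On these, $\comp_n(a_1, \bar b) = \mathbf{1}[b_{a_1-1} = 1]$, so as $\bar b$ varies the function $a_1 \mapsto T(a_1, \bar b)$ ranges over all $2^{n-1}$ Boolean functions on $\{2, \ldots, n\}$. Next I would collapse the dependence of $T(a_1, \bar b)$ on $\bar b$ into a $(d+1)$-dimensional statistic. Setting $f_j = p(j, b_{j-1})$ and $e_j = \exp\langle K f_j, Q h\rangle$ for $j \ge 2$, and
\[
A \;=\; \sum_{j=2}^{n} e_j \;\in\; \mathbb{R}, \qquad B \;=\; \sum_{j=2}^{n} e_j f_j \;\in\; \mathbb{R}^{d},
\]
the transformer's output becomes
\[
T(a_1, \bar b) \;=\; \mathbf{1}\!\left[\mathcal{N}\!\left(h \;+\; \frac{e_1(a_1)\,f_1(a_1) + B}{e_1(a_1) + A}\right) > 0\right],
\]
where $e_1(a_1) = \exp\langle K p(1, a_1), Q h\rangle$ and $f_1(a_1) = p(1, a_1)$ depend only on $a_1$ and the fixed weights. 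Hence $T(a_1, \bar b)$ depends on $\bar b$ only through $(A, B) \in \mathbb{R}^{d+1}$.

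I would then consider the hypothesis class $\mathcal{G}$ of functions $\{1, \ldots, n\} \to \{0, 1\}$ obtained by plugging arbitrary $(A, B) \in \mathbb{R}^{d+1}$ into the above formula. The previous step shows $\mathcal{G}$ shatters $\{2, \ldots, n\}$, so $\mathrm{VCdim}(\mathcal{G}) \ge n - 1$. For the matching upper bound, each element of $\mathcal{G}$ is computed from the $d+1$ real parameters $(A, B)$ by $O(d)$ arithmetic operations (to form the rational $\widehat{h}$) and then by the fixed ReLU MLP $\mathcal{N}$ of size $N$; the Goldberg--Jerrum VC dimension bound~\cite{goldberg1995bounding}, combined with the standard treatment of ReLU networks as piecewise-polynomial classifiers, yields $\mathrm{VCdim}(\mathcal{G}) \le \mathrm{poly}(d, N) = n^{o(1)}$, contradicting $\mathrm{VCdim}(\mathcal{G}) \ge n - 1$ for large $n$. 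The main obstacle I anticipate is making the VC upper bound fully rigorous in the presence of the softmax division by $e_1(a_1) + A$ and the piecewise-linear structure of the MLP, but this should follow directly from Goldberg--Jerrum applied to piecewise-rational classifiers with $d+1$ continuous parameters.
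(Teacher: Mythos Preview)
Your proposal is correct and follows essentially the same route as the paper: the same restriction to inputs $(a_1,b_2,\ldots,b_n)$ with $a_1\in\{2,\ldots,n\}$ and $b_j\in\{1,2\}$, the same decomposition of the softmax output into a part depending only on $a_1$ and a $(d{+}1)$-dimensional statistic $(A,B)$ depending only on $\bar b$, and the same VC-dimension contradiction via Goldberg--Jerrum versus the shattering of $\{2,\ldots,n\}$. The only cosmetic difference is that the paper phrases the hypothesis class as functions on $\mathbb{R}^{d+1}$ (with inputs $(\bar x,p)$) rather than on $\{2,\ldots,n\}$, which makes the applicability of Goldberg--Jerrum to the division and ReLU steps slightly more direct and avoids any worry about the exponential in $e_1(a_1)$.
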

 \begin{proof} Assume for contradiction that such transformer $T$ exists.
     We consider only inputs to $\comp_n$ of the form $(a_1, b_2, \ldots, b_n)$, where $a_1\in \{2, \ldots, n\}$ and $b_2, \ldots, b_n\in\{1, 2\}$. Observe that  $\comp_n(a_1 b_2\ldots b_n) = 1$ if and only if $b_{a_1} = 1$.

     Define value vectors of input tokens for such an input:
     \[f_1 = p(1, a_1),\,\, f_2 = p(2, b_2), \ldots,\,\, f_n = p(n, b_n).\]
Observe that:
\[\widehat{h} = \frac{e^{\langle K f_1, Q h\rangle}f_1 + \ldots + e^{\langle K f_n, Q h\rangle}f_n}{e^{\langle K f_1, Q h\rangle} + \ldots + e^{\langle K f_n, Q h\rangle}} = \frac{\bar x  + \bar y}{ p + q},\]
where
\begin{align*}
\bar x &= \bar x(a_1) =  e^{\langle K f_1, Q h\rangle}f_1\in\mathbb{R}^d,\\ p &= p(a_1) = e^{\langle K f_1, Q h\rangle}\in \mathbb{R}\\
\bar y &= \bar y(b_2\ldots b_n) = e^{\langle K f_2, Q h\rangle}f_2 + \ldots + e^{\langle K f_n, Q h\rangle}f_n\in\mathbb{R}^d\\
q &=  q(b_2\ldots b_n)  = e^{\langle K f_2, Q h\rangle} + \ldots + e^{\langle K f_n, Q h\rangle}\in\mathbb{R}.
\end{align*}
The output of the transformer is computed as 
\[T(a_1 b_2\ldots b_n) = \sign\left(\mathcal{N}(h + \frac{\bar x + \bar y}{p + q})\right),\]
where $\mathcal{N}$ is the output MLP.
Consider $F(\bar x, p, \bar y, q) = \sign\left(\mathcal{N}(h + \frac{\bar x + \bar y}{p + q})\right)$ as a parametric family of functions, where $(\bar x, p)$ are the inputs and $(\bar y, q)$ are parameters. It defines a hypothesis class:
\begin{equation}
\label{eq_hyp}
\left\{h_{\bar y, q}\colon\mathbb{R}^{d+1}\to\{0,1\} : h_{\bar y, q}(\bar x, p) = F(\bar x, p, \bar y, q),\,\, (\bar y, q)\in\mathbb{R}^{d+1}\right\}.
\end{equation}

Assuming for contradiction that $d = n^{o(1)}$ and the size of $\mathcal{N}$ is $n^{o(1)}$, we first show that \eqref{eq_hyp} has VC dimension $n^{o(1)}$. Indeed, it has $d + 1 = n^{o(1)}$ parameters, and $h_{\bar y, q}(\bar x, p)$ can be computed in $n^{o(1)}$ basic arithmetic operations and conditional jumps, based on comparing a real number with 0. Indeed, we need $d + 1$ additions to compute $\bar x + \bar y$ and $p + q$, then one division, and then $n^{o(1)}$  additions, products, and conditional jumps to calculate all ReLU neurons of $\mathcal{N}$.  By Theorem 2.3 in~\cite{goldberg1995bounding}, VC dimension is polynomial in these quantities (the number of parameters, the number of arithmetic operations and conditional jumps), which gives VC dimension $n^{o(1)}$.

We obtain a contradiction by showing that if $T$  computes $\comp_n$, the VC dimension of \eqref{eq_hyp} must be at least $n - 1$. Namely, we show that \eqref{eq_hyp} shatters the following $n - 1$ inputs:
\begin{align*}
(\bar x_1, p_1) &= (\bar x(2), p(2)) \\
&\vdots \\
(\bar x_{n-1}, p_{n-1}) &= (\bar x(n), p(n)) 
\end{align*}
For any $\delta\colon\{1, \ldots, n - 1\}\to\{0,1\}$, we show the existence of $(\bar y, q)\in\mathbb{R}^n$ such that:
\[h_{\bar y, q}(\bar x_{i}, p_{i}) = \delta(i), \qquad i = 1, \ldots, n -1.\]
Namely, set $b_{i+1} = 1$ if $\delta(i) = 1$ and $b_{i+1} = 2$ if $\delta(i) = 0$ for $i = 1, \ldots, n - 1$. Define $(\bar y, q) = (\bar y(b_2\ldots b_n), q(b_2\ldots b_n))$. Observe that for every $i\in\{1, \ldots, n - 1\}$, we have:
\begin{align*}
h_{\bar y, q}(\bar x_{i}, p_{i}) = T(i + 1, b_2\ldots b_n) = \begin{cases} 1 & b_{i + 1} = 1, \\ 0 & \text{otherwise},\end{cases} = \delta(i),
\end{align*}
as required.
 \end{proof}
\begin{theorem}
    There is no 1-layer single-token output transformer with embedding dimension $n^{o(1)}$ and output MLP with $n^{o(1)}$ ReLU neurons that computes $\sumtwo^{n,n}$.
\end{theorem}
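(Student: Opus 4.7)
The plan is to reuse the VC-dimension framework from the composition theorem almost verbatim. I would restrict attention to inputs of the form $(a_1, b_2, \ldots, b_n)$, where $a_1 \in \{1, \ldots, n-1\}$ plays the role of the ``input'' to be shattered and $(b_2, \ldots, b_n) \in \{-n, \ldots, n\}^{n-1}$ plays the role of the parameters. The decomposition $\widehat h = (\bar x + \bar y)/(p + q)$, with $(\bar x, p)$ depending only on $a_1$ and $(\bar y, q)$ only on $b_2, \ldots, b_n$, is unchanged; so under the contradiction hypothesis $d = n^{o(1)}$ and $|\mathcal{N}| = n^{o(1)}$, Theorem~2.3 of~\cite{goldberg1995bounding} again bounds the VC dimension of the associated hypothesis class~\eqref{eq_hyp} by $n^{o(1)}$.

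The combinatorial task would then be, for every $\delta\colon\{1, \ldots, n-1\} \to \{0,1\}$, to exhibit $b_2, \ldots, b_n$ such that $\sumtwo^{n,n}(i, b_2, \ldots, b_n) = \delta(i)$ for every $i \in \{1, \ldots, n-1\}$. I would set $b_{i+1} = -i$ when $\delta(i) = 1$ and $b_{i+1} = n$ otherwise. A routine case check confirms: $a_1 + a_1 \ne 0$ for $a_1 \ge 1$; no two chosen $b_j$'s sum to zero, because each lies in $\{-1, -2, \ldots, -(n-1)\} \cup \{n\}$ and no two elements of this set, including duplicates, sum to zero; and $a_1 + b_j = 0$ happens exactly when $j = a_1 + 1$ and $\delta(a_1) = 1$. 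Hence the output equals $\delta(a_1)$, so the $n-1$ points $(\bar x(1), p(1)), \ldots, (\bar x(n-1), p(n-1))$ are shattered, forcing VC dimension $\ge n - 1$ and contradicting the Goldberg--Jerrum upper bound.

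The only nontrivial point is selecting a filler value that avoids spurious zero-sum pairs with itself, with the encoded $-i$'s, and with any $a_1 \in \{1, \ldots, n-1\}$; the value $n$ satisfies all three constraints while remaining in the allowed alphabet $\{-n, \ldots, n\}$, which is what actually uses the hypothesis $m = n$. Everything else transfers directly from the composition argument, with the subset-indexed choices of $b_2 \ldots b_n$ here replacing the $\{1,2\}^{n-1}$-indexed choices used there.
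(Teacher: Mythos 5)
Your proof is correct. It uses the same analytic framework as the paper — decompose $\widehat{h}$ as $(\bar x + \bar y)/(p+q)$ with one part serving as the input and the other as the parameter, bound the VC dimension of the resulting hypothesis class above via Goldberg--Jerrum, and below by using the transformer's correctness to shatter a large set — but the combinatorial gadget is genuinely different. The paper splits the sequence into two halves of length $k = n/2$, encodes a pair of subsets $\alpha, \beta \subseteq \{1,\ldots,k\}$ via $a_i \in \{2i, 1\}$ and $b_i \in \{-2i, 1\}$ (so that $\sumtwo$ becomes set disjointness), and shatters the $k$ points corresponding to the singleton sets $\alpha = e_i$. You instead isolate a single query token $a_1$ against $n-1$ parameter tokens, turning $\sumtwo$ into a membership-lookup problem in the style of the paper's $\comp_n$ argument; your case analysis (no self-pairs, no spurious pairs among the fillers $n$ and the markers $-i$, and exactly one zero-sum pair when $\delta(a_1)=1$) is complete and the filler value $n$ is chosen correctly within the alphabet $\{-n,\ldots,n\}$. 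Your version shatters $n-1$ points rather than $n/2$, an immaterial constant-factor improvement; the paper's version has the conceptual advantage of making the connection to the disjointness communication matrix explicit, which the author uses in the concluding discussion to contrast $\sumtwo$ with $\pal_n$. Both yield the same contradiction with the $n^{o(1)}$ upper bound.
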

\begin{proof}
Assume for contradiction that such transformer $T$ exists.
Denote $k = n/2$ and consider any two binary vectors $\alpha, \beta\in\{0, 1\}^k$. Define two sequence of integers $a_1, \ldots, a_k, b_1, \ldots, b_k\in\{-n, \ldots, n\}$ by:
\[a_i = \begin{cases}2i & \alpha_i = 1,\\ 1 & \alpha_i = 0,\end{cases}, \qquad b_i = \begin{cases}-2i & \beta_i = 1,\\ 1 & \beta_i = 0,\end{cases}\]
for $i = 1, \ldots, k$.
Observe that $\sumtwo^{n,n}(a_1\ldots a_k b_1\ldots b_k) = 1$ if and only if there exists $i \in\{1, \ldots, k\}$ such that $\alpha_i = \beta_i = 1$. We consider our transformer $T$ on inputs of the form $a_1\ldots a_k b_1\ldots b_k$. We start by writing the values of the input tokens:
  \[f_1 = p(1, a_1),\ldots,  f_k = p(2, a_k),\,\, f_{k+1} = p(k+1, b_1), \ldots,  f_n = p(n, b_n),\]
and then writing:
\[
\widehat{h} = \frac{e^{\langle K f_1, Q h\rangle}f_1 + \ldots + e^{\langle K f_n, Q h\rangle}f_n}{e^{\langle K f_1, Q h\rangle} + \ldots + e^{\langle K f_n, Q h\rangle}} = \frac{\bar x  + \bar y}{ p + q},\]

where
\begin{align*}
\bar x &= \bar x(\alpha) =  e^{\langle K f_1, Q h\rangle}f_1 + \ldots + e^{\langle K f_k, Q h\rangle}f_k\in\mathbb{R}^d,\\
 p &= p(\alpha) = e^{\langle K f_1, Q h\rangle}+ \ldots + e^{\langle K f_k, Q h\rangle}\in \mathbb{R}\\
\bar y &= \bar y(\beta) =  e^{\langle K f_{k+1}, Q h\rangle}f_{k+1} + \ldots + e^{\langle K f_n, Q h\rangle}f_n\in\mathbb{R}^d,\\
 q &=  q(\beta) =  e^{\langle K f_{k+1}, Q h\rangle} + \ldots + e^{\langle K f_n, Q h\rangle}\in\mathbb{R}.
\end{align*}
As in the previous proof, we define a parametric family of functions $F(\bar x, p, \bar y, q) =  \sign\left(\mathcal{N}(h + \frac{\bar x + \bar y}{p + q})\right)$, and consider a hypothesis class, based on it:
\begin{equation}
\label{eq_hyp2}
\left\{h_{\bar y, q}\colon\mathbb{R}^{d+1}\to\{0,1\} : h_{\bar y, q}(\bar x, p) = F(\bar x, p, \bar y, q),\,\, (\bar y, q)\in\mathbb{R}^{d+1}\right\}.
\end{equation}
By the same argument, its VC dimension is $n^{o(1)}$ based on the fact that $d$ and the size of $\mathcal{N}$ are $n^{o(1)}$. We now obtain a contradiction by showing that if $T$ computes $\sumtwo^{n, n}$, the VC dimension of \eqref{eq_hyp2} is at least $k = n/2$. Namely, we show that \eqref{eq_hyp2} must shatter the following $k$ inputs:
\begin{align*}
(\bar x_1, p_1) &= (\bar x(100\ldots 0), p(100\ldots 0)) \\
(\bar x_2, p_2) &= (\bar x(010\ldots 0), p(010\ldots 0)) \\
&\vdots \\
(\bar x_k, p_k) &= (\bar x(000\ldots 1), p(000\ldots 1)).
\end{align*}
Consider any binary word $\beta\in\{0, 1\}^k$ that we want to realize as the sequence of values on $ (\bar x_1, p_1), \ldots, (\bar x_k, p_k)$. Notice that $F(\bar x_i, p_i, \bar y(\beta), q(\beta))$ is equal to the value of the transformer on input, constructed as above from two binary vectors, one being the vector with the unique 1 at position $i$, and the other being $\beta$. By our construction, the output is $1$ if and only if $\beta_i = 1$. Thus, we notice that the sequence of values $\beta$ can be realized by the hypothesis $h_{\bar y, q}$ for $(\bar y, q) = (y(\beta), q(\beta))$.
\end{proof}

\begin{theorem}
    There exists a  1-layer single-token output transformer with embedding dimension $O(1)$ and output MLP with $O(1)$ ReLU neurons that computes $\pal_n$.
\end{theorem}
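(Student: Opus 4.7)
My plan is to exploit infinite precision by having uniform attention compute a single real number $\widehat h$ that vanishes exactly on palindromes, and then detect this vanishing with a constant-size ReLU network. I will take embedding dimension $d = 1$, set $K = 0$ (so that $\langle Kf_i, Qh\rangle = 0$ for every $i$ and the softmax degenerates to the uniform distribution), put $h = 0$, and use the one-dimensional positional encoding
\[ p(i, b) = b\bigl(3^{-i} - 3^{-(n+1-i)}\bigr). \]

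With these choices a direct calculation gives
\[ \widehat h = \frac{1}{n}\sum_{i=1}^{n} x_i\bigl(3^{-i} - 3^{-(n+1-i)}\bigr) = \frac{1}{n}\sum_{i=1}^{n}(x_i - x_{n+1-i})\,3^{-i}. \]
Setting $c_i = x_i - x_{n+1-i}\in\{-1,0,1\}$, a standard base-$3$ separation argument applies: if any $c_i$ is nonzero and $k$ is the smallest such index, then $\bigl|\sum_i c_i 3^{-i}\bigr| \ge 3^{-k} - \sum_{i>k} 3^{-i} \ge 3^{-k}/2 \ge 3^{-n}/2$. Hence $\widehat h = 0$ iff the input is a palindrome, and otherwise $|\widehat h| \ge 3^{-n}/(2n)$.

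For the output MLP I take $\mathcal N(u) = c - \mathrm{relu}(u) - \mathrm{relu}(-u) = c - |u|$, which uses two ReLU neurons, together with any fixed constant $0 < c < 3^{-n}/(2n)$ (for instance $c = 3^{-n}/(4n)$). Then $\mathcal N(h + \widehat h) > 0$ iff $|\widehat h| < c$ iff $\widehat h = 0$ iff the input is a palindrome. Both the embedding dimension and the MLP size are $O(1)$, and the entire dependence on $n$ is absorbed into two real numbers of magnitude $3^{-\Theta(n)}$, matching the paper's remark that $O(n)$ bits of precision suffice. The only step I expect to need any real attention is producing the quantitative gap needed to choose $c$; under finite precision this gap would be the bottleneck, but the base-$3$ encoding supplies it cleanly, so the argument goes through without further work.
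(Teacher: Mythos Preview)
Your construction is correct and follows the same idea as the paper's proof sketch: encode position $i$ with weight proportional to $B^{-i} - B^{-(n+1-i)}$ so that uniform attention outputs a real number that vanishes exactly on palindromes, then detect zero with a two-neuron $c - |u|$ MLP. The paper uses base $10$ and leaves the details implicit; your version with base $3$, the explicit gap bound $3^{-n}/(2n)$, and the explicit MLP makes the sketch fully rigorous.
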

\begin{proof}[Proof sketch]
Assume on input we have a binary word $a_1\ldots a_k b_k\ldots b_1$. It is a palindrome if and only if $a_1 = b_1, \ldots, a_k = b_k$. It is easy to construct a softmax layer that computes a number, proportional to:
\[(a_1 - b_1)  + 10^{-1} (a_2 - b_2) + \ldots  +10^{-k +1} (a_k - b_k).\]
This number is $0$ if and only if the initial word is a palindrome, and this can be checked by a constant-size MLP.
\end{proof}

It is curious, why the VC dimension lower bound technique works for $\sumtwo$ but not for $\pal_n$. For both of these functions, lower bounds for $n^{o(1)}$-precision transformers can be proved via reductions from communication complexity -- from the \emph{disjointness} problem for $\sumtwo$ and from the \emph{equality} problem for $\pal_n$. The key factor is that the VC dimension of the \emph{communication matrix} of the disjointness problem is $n$ for $n$-bit strings, while for the equality problem it is low, just 1.

    \paragraph{Acknowledgment} Supported  by the National Center for Artificial Intelligence CENIA FB210017, Basal ANID. I thank Felipe Urrutia, Hector Jimenez, Tomasz Steifer, and Cristóbal Rojas for discussions.

\end{document}